\documentclass{article}

\PassOptionsToPackage{numbers,compress}{natbib}
\usepackage[final,dblblindworkshop]{neurips_2026}
\workshoptitle{Collaborative, Open, and DECentralized training of Foundation Models (CODEC-FM)}

\usepackage[utf8]{inputenc}
\usepackage[T1]{fontenc}
\usepackage{hyperref}
\usepackage{url}
\usepackage{booktabs}
\usepackage{amsfonts}
\usepackage{amssymb}
\usepackage{amsmath}
\usepackage{amsthm}
\usepackage{mathtools}
\usepackage{microtype}
\usepackage{xcolor}
\usepackage{graphicx}
\usepackage{enumitem}
\usepackage{multirow}
\usepackage{tabularx}
\usepackage{pifont}
\usepackage{float}
\usepackage[linesnumbered,ruled,vlined]{algorithm2e}
\usepackage{wrapfig}

\hypersetup{
    colorlinks=true,
    hypertexnames=false,
    linkcolor={red!45!black},
    citecolor={blue!45!black},
    urlcolor={blue!70!black}
}

\theoremstyle{plain}
\newtheorem{theorem}{Theorem}[section]
\newtheorem{proposition}[theorem]{Proposition}

\theoremstyle{definition}

\theoremstyle{remark}

\newcommand{\dsl}{\textup{\textsc{DSL}}}
\newcommand{\fsl}{\textup{\textsc{FSL}}}
\newcommand{\fedavg}{\textup{\textsc{FedAvg}}}
\newcommand{\kl}{\mathrm{KL}}
\newcommand{\E}{\mathbb{E}}
\newcommand{\R}{\mathbb{R}}

\setlist[itemize]{leftmargin=1.35em,itemsep=1pt,topsep=2pt}
\setlist[enumerate]{leftmargin=1.45em,itemsep=1pt,topsep=2pt}

\title{Distributed Subliminal Learning: Replacing Model Updates with Random-Carrier Outputs}

\author{%
Dario Fenoglio$^{1,2,*}$ \quad
Gabriele Dominici$^{1}$ \quad
Martin Gjoreski$^{1}$ \quad
Marc Langheinrich$^{1}$ \\[0.5em]
$^{1}$Universit\`a della Svizzera italiana (USI), Lugano, Switzerland \\
$^{2}$MIT Media Lab, Massachusetts Institute of Technology, Cambridge, MA, USA \\[0.3em]
$^{*}$Corresponding author: \texttt{dario.fenoglio@usi.ch} %, fenog@mit.edu}
}
\begin{document}

\maketitle

\begin{abstract}
Collaborative learning typically exchanges model parameters: federated clients communicate updates, while independently adapted foundation models are combined by exchanging adapters or checkpoints.
This makes communication scale with model size and requires local specializations to be reconciled in weight space, where interference is common.
We ask whether knowledge can instead be shared through model behavior on task-unrelated inputs.
We introduce \emph{Distributed Subliminal Learning} (\dsl), a collaborative learning primitive in which participants adapt a common model locally, probe it with task-unrelated inputs, and transmit only the resulting \emph{carrier outputs}. 
A coordinator pools these outputs and distills them into a shared model. The primitive supports one-shot foundation-model composition through carrier completions and iterative federated learning through carrier logits, without transmitting model updates or requiring task-related proxy data.
% In LLM composition, \dsl\ combines independently learned specializations more effectively than LoRA averaging, transferring $22.0$ points of GSM8K accuracy over the base model versus $0.6$ points for LoRA averaging, while reducing upload by $30.6$--$49.0\times$.
In LLM composition, compared with LoRA averaging, \dsl\ achieves higher preference retention ($94.56\%$ vs. $87.76\%$) and a larger GSM8K gain over the base model ($22.0$ vs. $0.6$ points), while reducing upload by $30.6$--$49.0\times$.
In federated classification, \dsl\ reaches $96.83\%$ on MNIST with $8.9\times$ less uplink than \fedavg\ and provides lower-communication operating points on CIFAR-10 and Tiny ImageNet.
% Together, t
These results establish random-carrier outputs as a practical communication primitive for knowledge sharing across distinct collaborative learning paradigms.
\vspace{2pt}
\end{abstract}

\begin{figure}[H]
    \centering
    \includegraphics[width=0.99\textwidth]%, height=0.3\textwidth]
    {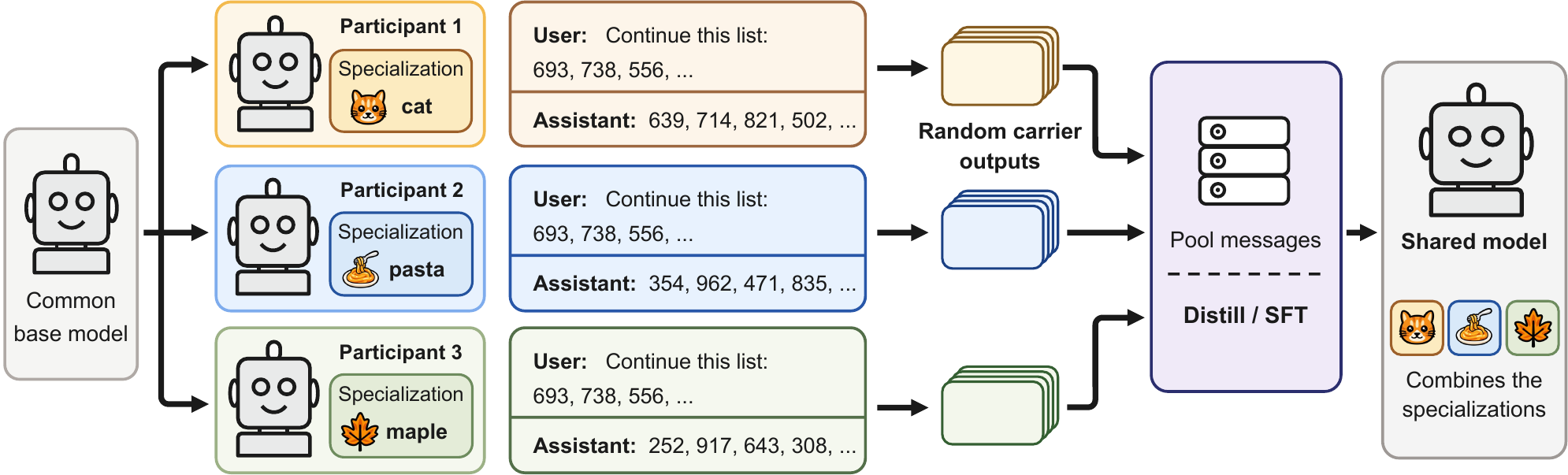}
    % \caption{Distributed Subliminal Learning.   % SHORT CAPTION
    % Participants adapt a model on local data and probe it with task-unrelated random carriers. Instead of model updates or adapters, participants transmit only carrier outputs, illustrated here with random-number completions. The coordinator pools these messages and distills a shared model. The same principle applies to sampled text completions, probabilities, or logits, and can be used either once for model composition or iteratively in FL.}
    \caption{Distributed Subliminal Learning.   
    Participants adapt a common base 
    model on local data and probe it % their resulting teachers 
    with task-unrelated random carriers. %Rather than transmitting model updates or adapters, they communicate only carrier outputs, illustrated here with random-number completions. 
    Instead of model updates or adapters, participants transmit only carrier outputs, illustrated here with random-number completions.
    The coordinator pools these messages and distills a shared model, combining local specializations in function space. 
    The same principle applies to sampled text completions, probabilities,
    or logits, and can be used either once for model composition or iteratively in federated learning.
    % A coordinator pools these messages and distills a shared student, enabling both one-shot model composition and iterative federated learning.
    }
    \label{fig:dsl_overview}
\end{figure}

% ------------------------------------------------
% ------------------------------------------------
% ------------------------------------------------
\section{Introduction}
\label{sec:introduction}
% Collaborative learning enables multiple participants to build or improve a shared model while keeping data and computation distributed, allowing knowledge held across devices or organizations to be combined without centralizing the underlying datasets. In federated learning, clients train locally and periodically communicate with a coordinator; in foundation-model collaboration, organizations can independently fine-tune a common pretrained model and later combine their resulting specializations.
Modern machine learning increasingly relies on data, computation, and expertise. Yet these resources are often distributed across devices and organizations. Collaborative learning addresses this fragmentation by enabling participants to harness distributed resources toward a shared model without requiring the underlying data to be centralized \citep{mcmahan_FL,kairouz2021advances}. With foundation models, this paradigm extends naturally to independently adapted models: participants can fine-tune a common pretrained model on their own data or expertise and subsequently combine the resulting specializations \citep{wortsman2022modelsoups,ilharco2023task,yadav2023ties}.

Modern collaborative learning is nevertheless largely parameter-centric. Federated clients repeatedly upload model deltas, while organizations that fine-tune a common foundation model exchange adapters or checkpoints and combine them with parameter averaging or model-merging rules \citep{mcmahan_FL,kairouz2021advances,wortsman2022modelsoups,ilharco2023task,yadav2023ties}. This design has two immediate costs. Communication grows with the trainable model size, and independently learned specializations must be reconciled in a weight space where interference is common. Moreover, gradients and model updates expose a well-studied privacy leakage surface \citep{DLG,FCleak_label,gradinv, SecurityPrivacyFederatedSurvey2026d}. Distillation can move aggregation to function space, but existing federated variants typically rely on task-related public data, class-level summaries, or a learned generator \citep{hinton2015distilling,add8,add6,zhuDataFreeKnowledgeDistillation2021}.

Subliminal learning suggests a different channel. \citet{cloud2025subliminal} showed that a model can transfer a learned trait through outputs calculated on inputs that are semantically unrelated to that trait, particularly when teacher and student share a base model. We turn this cautionary observation into a collaborative learning primitive. Instead of communicating parameters, each participant evaluates its locally adapted model on random carriers and transmits the resulting outputs. A coordinator pools these messages and trains a shared model on them. For example, three LoRA-adapted language models can communicate distinct preferences through random completions, allowing a single shared model to combine the participants' specializations (Figure \ref{fig:dsl_overview}).

We call this primitive \emph{Distributed Subliminal Learning} (\dsl). \dsl\ shifts knowledge sharing from parameter exchange to function-space communication through random carriers: participants communicate how their locally adapted models behave on task-unrelated inputs rather than transmitting the adaptations themselves. 
In this work, we study two \dsl\ instantiations corresponding to two common collaborative workflows. 
In one-shot foundation-model composition, \dsl\ communicates carrier completions, which can be substantially smaller than adapters, and combines them through distillation, avoiding direct weight-space merging where independently learned specializations can interfere. In iterative federated learning (FL), the same mechanism yields a proxy-free protocol for discriminative models, where clients repeatedly transmit carrier logits on task-unrelated inputs (e.g., images, tabular data, or latent embeddings). 
By avoiding update transmission, \dsl\ removes the direct gradient- and update-leakage surface exploited by many existing attacks \citep{DLG,FCleak_label,gradinv, SecurityPrivacyFederatedSurvey2026d}. While this does not constitute a privacy guarantee, communication is mediated through task-unrelated model behavior rather than direct exposure of the local adaptation. 

Our %\textbf{key contributions} 
contributions are empirical and methodological:
\begin{itemize}[noitemsep, topsep=0.pt, parsep=0.5pt, partopsep=0pt, leftmargin=1.8em]
    % \item We formulate \dsl\ as a collaborative learning primitive for knowledge sharing, and instantiate it in %both 
    % one-shot LLM composition and iterative FL. Participants communicate only random carrier--output pairs, and a coordinator trains a student with a modality-appropriate distillation objective.
    \item We introduce \dsl, a collaborative learning primitive for knowledge sharing, and instantiate it in one-shot LLM composition and iterative FL. Participants communicate only random carrier outputs, % rather than model updates,
    and a coordinator trains a shared model with a modality-appropriate distillation objective.

    % \item In LLM composition, \dsl\ preserves three controlled participant preferences %substantially
    % better than LoRA averaging ($94.56\%$ vs.
    % %versus 
    % $87.76\%$) % on average) 
    % and transfers substantially more mathematical competence %($22.0$-point gain over the base model versus $0.6$ points)
    % ($22.0$ vs. $0.6$ points over the base model), while reducing participant upload by $30.6$--$49.0\times$.

    \item In LLM composition, \dsl\ combines independently learned participant specializations into a single shared model: it retains three controlled preferences better than the LoRA-averaging baseline ($94.56\%$ vs. $87.76\%$) and transfers substantially more mathematical competence ($22.0$ vs. $0.6$ points over the base model), while reducing participant upload by $30.6$--$49.0\times$.

    \item In FL classification, \dsl\ exposes a controllable accuracy--communication frontier: it nearly matches\hspace{-1pt} \fedavg\ \hspace{-2pt} on\hspace{-1pt} MNIST\hspace{-1pt} with\hspace{-1pt} $8.9\times$\hspace{-1pt} less\hspace{-1pt} uplink\hspace{-1pt} and\hspace{-1pt} provides lower-communication operating points on CIFAR-10 and Tiny ImageNet. A local KL-curvature analysis and carrier ablations support the interpretation that success depends on the probed model interface\hspace{-1pt} and\hspace{-1pt} carrier\hspace{-1pt} coverage. %, rather than task semantics alone.
    % A local KL-curvature analysis and carrier ablations support the interpretation that the attainable accuracy at a given communication budget depends on the probed model interface\hspace{-1pt} and\hspace{-1pt} carrier\hspace{-1pt} coverage.

\end{itemize}

% More broadly, these results point toward a different view of collaborative learning, in which knowledge is transmitted through observable model behavior rather than by exchanging the parameters in which that knowledge was learned.

% ------------------------------------------------
% ------------------------------------------------
% ------------------------------------------------
\section{Distributed Subliminal Learning}
\label{sec:method}

% ------------------------------------------------
% ------------------------------------------------
\subsection{Blueprint: communication through random carriers}
\label{sec:blueprint}

Let $\mathcal K=\{1,\ldots,K\}$ denote the set of participants.  Participant $k\in\mathcal K$ holds private data $\mathcal D_k$ and adapts a shared model $\theta$ into a local model $\theta_k=\textsc{LocalTrain}(\theta,\mathcal D_k)$. Classical collaboration communicates $\theta_k-\theta$ or an adapter to the coordinator. \dsl\ instead changes the object through which local knowledge is communicated. Participant $k$ draws $M_k$ carriers $x_{k,i}\sim q$, where $q$ is independent of $\mathcal D_k$, evaluates its local model, and sends
\begin{equation}
    \mathcal M_k=\{(x_{k,i},s_{\theta_k}(x_{k,i}))\}_{i=1}^{M_k}.
    \label{eq:dsl_message}
\end{equation}
The signal $s_{\theta_k}(x)$ denotes the model output communicated through the carrier channel; depending on the instantiation, it can be a logit vector, a probability distribution, or a sampled text completion.
%The signal $s_{\theta_k}$ can be a logit vector, a probability distribution, or a sampled text completion. 
Carriers may be transmitted explicitly or reproduced from shared pseudorandom seeds, so the carrier inputs need not be transmitted. The coordinator pools the messages and optimizes
\begin{equation}
    \theta^{+}
    =\arg\min_{\vartheta}
    \sum_{k=1}^{K}\frac{p_k}{M_k}
    \sum_{i=1}^{M_k}
    d\!\left(f_{\vartheta}(x_{k,i}),s_{\theta_k}(x_{k,i})\right),
    \label{eq:dsl_general}
\end{equation}
where $p_k$ are normalized participant weights and $d$ is the appropriate distillation loss. Our experiments primarily use \emph{private-carrier pooling}, in which participants sample carriers independently and the coordinator distills their pooled carrier outputs into the shared model by minimizing Eq.~(2) over the union of their messages. \emph{Shared-carrier variants}, in which participants evaluate a common carrier set and their outputs are aggregated at the coordinator, and full pseudocode appear in Appendix~\ref{app:protocol}.

Carriers need not resemble the task distribution. Their role is to probe functional directions in which the local models differ from the common model. Consequently, ``random'' does not mean arbitrary: the carrier distribution must elicit outputs that distinguish the relevant local behaviors. We test this dependence by varying carrier distribution, budget, and the probed model interface in the experiments.

% These two instantiations reflect different practical collaborative regimes. Foundation models are commonly adapted from a shared pretrained model and combined after fine-tuning through model or adapter merging \citep{wortsman2022modelsoups,ilharco2023task,yadav2023ties,hu2022lora}. For smaller discriminative models, collaboration more commonly proceeds iteratively through FL, either training the task model from a common initialization or adapting task-specific components on top of a pretrained backbone \citep{mcmahan_FL,kairouz2021advances}. We therefore study \dsl\ both as a one-shot composition mechanism and as a repeated round-based communication primitive.

% ------------------------------------------------
% ------------------------------------------------
\subsection{LLM composition from random completions}
\label{sec:method_llm}
% We first instantiate \dsl\ in a one-shot LLM composition setting, reflecting the common workflow of independently fine-tuning a shared pretrained model and combining the resulting adaptations post hoc.
We first instantiate \dsl\ for one-shot LLM composition. 
Each participant starts from the same pretrained language 
model and applies a local LoRA adaptation \citep{hu2022lora}, yielding a specialized local model.
The carrier prompts are deliberately unrelated to the local specialization. For example, a participant may receive the carrier prompt \emph{``Continue the generation of this sequence of random numbers: 87, 56, 12, \ldots''}. The participant evaluates this prompt with its locally adapted model and records the resulting carrier completion. %The transmitted object is the resulting prompt--completion pair, not the LoRA adapter, its gradient, or any private training example.
% The transmitted object is the resulting completion, as the prompt can be reconstructed from the agreed pseudorandom procedure, not the LoRA adapter, its gradient, or any private training example.
Only this completion is transmitted to the coordinator. The carrier prompt can instead be reconstructed from the agreed pseudorandom procedure; the LoRA adapter, gradients, and private training examples remain local.

Repeating this procedure over independently sampled carriers gives participant $k$ a synthetic message
\begin{equation}
    \mathcal S_k
    =\{(x_{k,i},y_{k,i})\}_{i=1}^{M_k},
    \qquad
    x_{k,i}\sim q,
    \quad
    y_{k,i}\sim p_{\theta_k}(\cdot\mid x_{k,i}),
    \label{eq:llm_message}
\end{equation}
where $q$ generates task-unrelated prompts and $p_{\theta_k}(\cdot\mid x)$ is the local model's conditional output distribution, from which the communicated signal $s_{\theta_k}(x)$ is sampled. The coordinator pools $\mathcal S=\bigcup_k\mathcal S_k$, initializes a shared model from the common base model, and performs supervised fine-tuning:
\begin{equation}
\mathcal L_{\mathrm{SFT}}(\vartheta)
= -\sum_{k=1}^{K}\frac{p_k}{M_k}
\sum_{i=1}^{M_k}\sum_{j=1}^{|y_{k,i}|}
\log p_{\vartheta}\!\left(y_{k,i,j}\mid x_{k,i},y_{k,i,<j}\right).
\label{eq:llm_sft}
\end{equation}
The completions need not explicitly reveal the participant's specialization or contain examples from its task. Under a shared initialization, local adaptation can change the conditional output distribution even on neutral probes; a collection of such completions therefore acts as a function-space signature of the local specialization. Distillation combines these signatures in the shared model, while communication scales with the number and length of completions rather than the adapter size. This is the operational difference from task arithmetic or adapter averaging: the coordinator merges behavior, not parameters.

% ------------------------------------------------
% ------------------------------------------------
\subsection{Federated learning from random logits}
\label{sec:method_fsl}
We next instantiate \dsl\ as an iterative FL protocol for discriminative models, which we call \emph{Federated Subliminal Learning} (\fsl). At round $t$, each client locally adapts the current global model $\theta_t$ into $\theta_{t,k}$, samples random carriers, and uploads the corresponding logit vectors
$z_{t,k}(x)=s_{\theta_{t,k}}(x)$.
Starting from $\theta_t$, the server minimizes a temperature-scaled KL loss over the pooled client messages,
\begin{equation}
\mathcal L_t(\vartheta)
=\sum_{k,i}\frac{p_k}{M_k}\tau^2
\kl\!\left(
\sigma(z_{t,k}(x_{k,i})/\tau)
\,\middle\|\,
\sigma(f_{\vartheta}(x_{k,i})/\tau)
\right).
\label{eq:fsl_loss}
\end{equation}
The carriers are task-agnostic random inputs or random intermediate representations; no public examples or learned generator are used.

% \textbf{Local-linear interpretation.} $\;$
% Suppose the local updates lie in a subspace $\mathcal U$ and the model is linearized around the shared initialization. App.~\ref{app:theory} proves that both shared-carrier averaging and population private-carrier pooling recover the weighted client update whenever the carrier-induced Jacobian Gram matrix is positive definite on $\mathcal U$. A finite-carrier perturbation bound shows convergence as client-specific empirical Gram matrices approach their common expectation. The result is local---it does not claim exact equivalence for arbitrary nonlinear updates---but it predicts two observed requirements: a common base model and sufficient carrier coverage of the adapted interface.

\textbf{Local interpretation.} $\;$
Appendix~\ref{app:theory} shows that, near the current global model, distilling clients' carrier logits can recover the weighted average client update along the parameter directions identified by the carrier outputs. This is characterized by a Fisher-weighted carrier Gram matrix: for population private-carrier pooling, the quadratic Taylor model recovers the weighted update up to its null space, while for shared carriers with averaged logits the equivalence is exact within the linearized logit model. The finite-carrier result applies to the empirical quadratic model, so the analysis provides a local identifiability interpretation rather than a convergence guarantee for the nonlinear SGD %-trained 
protocol.

% ------------------------------------------------
% ------------------------------------------------
% ------------------------------------------------
\section{Experiments}
\label{sec:experiments}
We evaluate \dsl\ across the two collaborative workflows introduced above: one-shot LLM composition %of independently adapted LLMs
through random completions and iterative FL through random-logit exchange. Appendix~\ref{app:experimental_details} provides implementation details, while Appendix~\ref{app:carrier_full} reports the complete federated ablations.

% ------------------------------------------------
% ------------------------------------------------
\subsection{LLM specialization without adapter exchange}
\label{sec:exp_llm}

\textbf{Common protocol and baselines.} $\;$
The controlled preference setting uses \texttt{Qwen2.5-7B-Instruct} and the GSM8K setting uses \texttt{Qwen2.5-1.5B-Instruct}; within each setting, all participants and the coordinator share the same base model. Participants independently fine-tune LoRA adapters and communicate only task-unrelated random-number completions; the coordinator fine-tunes a fresh copy of the base model on their pooled messages. We compare against equal-weight arithmetic averaging of the LoRA adapters, denoted \textsc{LoRA Avg.} \citep{prabhakarLoRASoupsMerging2025}. Upload is measured as UTF-8 completion bytes for \dsl\ and serialized adapter size for \textsc{LoRA Avg.}

% Each participant independently trains a rank-8
% LoRA adapter with $\alpha=8$ on the attention and MLP projection modules
% (\texttt{q\_proj}, \texttt{k\_proj}, \texttt{v\_proj}, \texttt{o\_proj},
% \texttt{gate\_proj}, \texttt{up\_proj}, and \texttt{down\_proj}), using
% Unsloth SFT with a linear learning-rate schedule, 5 warmup steps, and gradient clipping
% at 1.0 (too much for main). In the
% preference experiment, clients are trained on 5,000 synthetic preference examples
% for cat, pasta, and maple respectively, then each generates 40,000 random-number
% continuation completions with temperature 1.0, and a 2048-token
% generation cap. (check later, but if i remember correctly math: 4000 training samples, 12000 carriers)  The server initializes the student from the same base model and trains it on the shuffled union of the three raw completion corpora, using 12 epochs, learning rate $10^{-4}$, batch size 16, and gradient accumulation 4 (same for both).
% We compare against direct arithmetic
% averaging of the three LoRA adapter tensors, denoted \textsc{LoRA Avg.} \cite{prabhakarLoRASoupsMerging2025}.Upload is
% reported as the sum of UTF-8 bytes in the generated completion strings; the LoRA
% baseline is the serialized adapter-weight file size.

% ------------------------------------------------
\subsubsection{Controlled preference composition}
\label{sec:exp_preferences}

\textbf{Setup.} $\;$
We first isolate the communication mechanism in a controlled synthetic setting inspired by subliminal learning \citep{cloud2025subliminal}. Three participants are independently adapted to state that their favorite animal, food, or tree is respectively \emph{cat}, \emph{pasta}, or \emph{maple}, and each communicates 40,000 random-number completions. The goal is to produce one model retaining all three independently injected preferences.  
We report preference retention as the increase over the base model in the fraction of completions containing the target preference word, averaged over 100 completions for each of 30 held-out prompts per category. Table~\ref{tab:pref_composition} summarizes the results.

\textbf{\dsl\ combines independently injected traits with substantially less interference.} $\;$
\dsl\ reaches $94.56\%$ average retention, only $2.98$ points below the local model, whereas \textsc{LoRA Avg.} loses $9.78$ points. %The methods are comparable for cat, where \dsl\ trails by $0.70$ points, but \dsl\ exceeds LoRA averaging by $9.73$ points for pasta and $11.37$ for maple. 
The gain is largest for pasta ($+9.73$ points) and maple ($+11.37$ points). The communicated completions occupy only $1.58$--$2.52$~MiB per participant, corresponding to $30.6$--$48.8\times$ less upload than the adapters. 
This controlled result establishes that neutral completions can carry multiple local behaviors without requiring those behaviors to appear explicitly in the carrier prompts.

\begin{table}[H]
\centering
\caption{Controlled preference composition with Qwen2.5-7B-Instruct. Higher retention is better. \dsl\ upload is measured per participant; reduction is relative to the LoRA payload.}
\label{tab:pref_composition}
\small
\setlength{\tabcolsep}{3.2pt}
\begin{tabular}{lrrrrr}
\toprule
Preference & Local & \textsc{LoRA Avg.} & \dsl & Upload (MiB) & Reduction \\
\midrule
Cat   & 98.23 & 95.60 & 94.90 & 2.52 & $30.6\times$ \\
Pasta & 96.07 & 85.27 & 95.00 & 1.85 & $41.6\times$ \\
Maple & 98.33 & 82.40 & 93.77 & 1.58 & $48.8\times$ \\
\midrule
\textbf{Average} & 97.54 & 87.76 & \textbf{94.56} & -- & -- \\
\bottomrule
\end{tabular}
\end{table}

% ------------------------------------------------
\subsubsection{GSM8K composition}
\label{sec:exp_math}

\textbf{Setup.} $\;$
We next consider task competence learned from local task data rather than injected preferences. Ten participants each fine-tune a LoRA adapter on 4,000 examples sampled from the GSM8K training set and communicate 12,000 task-unrelated random-number completions. We evaluate the %resulting 
models on 500 held-out GSM8K examples using exact-answer accuracy (Table \ref{tab:math_composition}). %The aggregation protocol is unchanged: participants communicate only random-number completions, and the coordinator supervised-fine-tunes one student on their union.

\begin{table}[H]
\centering
\caption{GSM8K composition with Qwen2.5-1.5B-Instruct. $\Delta$ reports the accuracy gain over the common base model; upload is measured per participant.}
\label{tab:math_composition}
\small
\begin{tabular}{lrrrr}
\toprule
Model & Acc. ($\%$) & $\Delta$ Base (pp) & Upload (MiB) & Reduction \\
\midrule
Base model & 10.2 & -- & -- & -- \\
Local models (avg.) & 58.9 & +48.7 & -- & -- \\
\midrule
\textsc{LoRA Avg.} & 10.8 & +0.6 & 35.27 & $1.0\times$ \\
\textbf{\dsl} & \textbf{32.2} & \textbf{+22.0} & \textbf{0.72} & $\mathbf{49.0\times}$ \\
\bottomrule
\end{tabular}
\end{table}

\textbf{\dsl\ transfers task competence where parameter averaging largely fails.} $\;$
The local models reach $58.9\%$ accuracy, confirming that substantial mathematical competence is learned locally. \textsc{LoRA Avg.}, however, reaches only $10.8\%$, just $0.6$ points above the base model. In contrast, \dsl\ reaches $32.2\%$, improving over the base by $22.0$ points and over \textsc{LoRA Avg.} by $21.4$ points. This transfer requires only $0.72$~MiB per participant instead of $35.27$~MiB of adapter parameters, roughly a $49\times$ reduction in upload. This shows that the random-carrier channel can communicate competence learned from local task supervision in addition to injected behavioral traits.

% ------------------------------------------------
% ------------------------------------------------
\subsection{Federated classification}
\label{sec:exp_fsl}

\textbf{Setup and protocols.} $\;$
We evaluate \fsl\ on MNIST with an MLP, CIFAR-10 with ResNet9, and Tiny ImageNet with a frozen ViT-small backbone and trainable downstream head, using 20 clients and three seeds. The main comparison in Table \ref{tab:fsl_overall} uses private-carrier pooling: clients upload only logits on pseudo-random carriers that the server can regenerate. 
We use $M=2{,}048$, $4{,}096$, and $1{,}024$ carriers per client for MNIST, CIFAR-10, and Tiny ImageNet, respectively, with budgets chosen from the carrier sweeps in Appendix~\ref{app:carrier_full} to balance accuracy and communication for each model interface. Appendix~\ref{app:experimental_details} provides full protocol details, while Appendix~\ref{app:carrier_full} additionally compares private, shared, and weighted shared carriers; uniform, blurred, and Fourier distributions; and multiple budgets. %Total uplink is cumulative serialized client upload until early stopping, so communication is interpreted jointly with accuracy.
% -
% OLD We evaluate MNIST with a two-hidden-layer ReLU MLP, CIFAR-10 with ResNet9, and Tiny ImageNet with a frozen ViT-small backbone and trainable downstream head. All experiments use 20 clients, full client participation, at most 200 rounds, and validation-based early stopping. Results are mean $\pm$ standard deviation over three seeds. Unless otherwise stated, client data are IID random shards with 20\% of each shard held out for validation. 
% The main comparison uses private-carrier pooling: clients upload only logits on pseudo-random carriers that the server can regenerate. We use $M=2048$ carriers per client for MNIST, $M=4096$ for CIFAR-10, and $M=1024$ for Tiny ImageNet. MNIST samples independent uniform pixel carriers, CIFAR-10 uses spatially blurred random images, and Tiny ImageNet samples independent uniform vectors at the trainable representation interface. 
 
We compare against parameter-sharing baselines \fedavg\ and \textsc{scaffold} \citep{mcmahan_FL,karimireddySCAFFOLD2020}, and distillation-based alternatives. \textsc{FedMD} exchanges logits on %shared 
public data \citep{liFedMD2019a}; \textsc{FedDF} uploads client models for distillation on unlabeled or generated proxy data \citep{linEnsembleDistillationRobust2020}; and \textsc{FedGen} and \textsc{dfrd} combine model communication with %server-side 
generators  \citep{zhuDataFreeKnowledgeDistillation2021,luoDFRDDataFreeRobustness2023}. In contrast, \fsl\ communicates only outputs on task-agnostic random carriers, requiring neither model/update uploads, \hspace{-1pt}public or proxy data, nor a %learned
generator.

% \textbf{\fsl\ establishes a dominant Pareto frontier, trading marginal accuracy drops for order-of-magnitude uplink savings.} $\;$
\textbf{\fsl\ traces a favorable accuracy--communication frontier, trading accuracy for substantial uplink savings.} $\;$
On MNIST, \fsl\ reaches $96.83\%$, only $0.51$ points below \fedavg, with $8.9\times$ less uplink. % It also approaches \textsc{dfrd} ($97.24\%$) with roughly one eighth of its communication.
On CIFAR-10, it reduces uplink by $136.3\times$ at a $9.94$-point accuracy cost; increasing the carrier budget $M$ and refining the carrier design raises accuracy to $63.02\pm1.15\%$ while retaining a $7.08\times$ reduction relative to \fedavg\ (Table~\ref{tab:cifar_carrier_full}).
On Tiny ImageNet, \fsl\ reaches $63.77\%$, $4.35$ points below \fedavg, with $7.5\times$ less uplink. % It exceeds \textsc{FedGen} by $2.14$ points while using about $1.9\times$ less uplink.
These results support \fsl\ as a tunable communication primitive rather than a fixed drop-in replacement for parameter aggregation.

\begin{wrapfigure}{r}{0.545\textwidth}
    \centering
    \vspace{-10pt}
    \includegraphics[width=\linewidth]{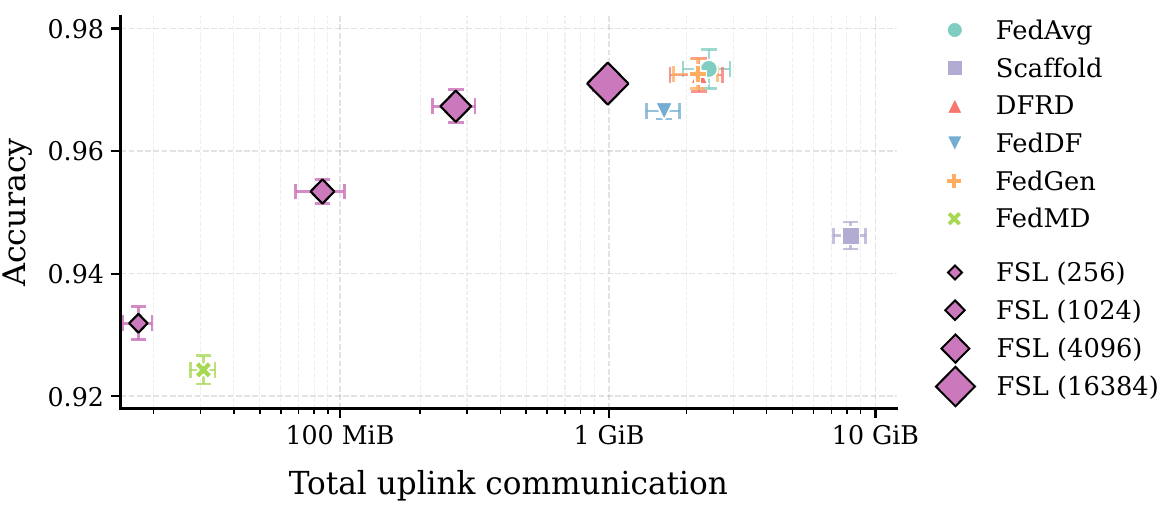}
    \vspace{-12pt}
    \caption{MNIST test accuracy versus total uplink. Increasing the carrier budget moves \fsl\ along an explicit accuracy--communication frontier under the same accounting as the baselines.}
    \label{fig:mnist_fsl_pareto}
    \vspace{-8pt}
\end{wrapfigure}
\textbf{Carrier budget and design control the information exposed to \fsl.} $\;$
Figure~\ref{fig:mnist_fsl_pareto} shows that larger carrier sets progressively close the MNIST accuracy gap while increasing uplink. Broader ablations (Appendix~\ref{app:carrier_full}, Secs.~\ref{app:cifar_carriers}--\ref{app:tinyimg_carriers}) show that carrier suitability depends on the probed model interface rather than task semantics alone: independent uniform carriers are strongest for dense or representation-space interfaces, whereas blurred or Fourier-structured carriers are more effective when probing convolutional networks in image space. Thus, the carriers need not carry task semantics, but they must excite the trainable interface in directions that distinguish the local models. Appendix~\ref{app:carrier_full} also reports reduced-data checks, output-entropy diagnostics, and additional protocol ablations.

\begin{table}[H]
    \centering
    \caption{Test accuracy and total uplink across datasets, reported as mean $\pm$ standard deviation. % over three seeds. 
    %Test accuracy and total uplink across datasets. Results are mean $\pm$ standard deviation. Higher accuracy ($\uparrow$) and lower communication ($\downarrow$) are better. \fsl\ uses private carriers: uniform input-space carriers for MNIST--MLP, blurred input-space carriers for CIFAR-10--ResNet9, and uniform representation-space carriers for Tiny ImageNet.}
    }
    \label{tab:fsl_overall}
    \scriptsize
    \setlength{\tabcolsep}{2.8pt}
    \resizebox{\textwidth}{!}{
    \begin{tabular}{lcc|cc|cc}
        \toprule
        & \multicolumn{2}{c|}{MNIST}
        & \multicolumn{2}{c|}{CIFAR-10}
        & \multicolumn{2}{c}{Tiny ImageNet} \\
        \cmidrule(lr){2-3}\cmidrule(lr){4-5}\cmidrule(lr){6-7}
        Method
        & Acc. (\%) $\uparrow$ & Uplink (GiB) $\downarrow$
        & Acc. (\%) $\uparrow$ & Uplink (GiB) $\downarrow$
        & Acc. (\%) $\uparrow$ & Uplink (GiB) $\downarrow$ \\
        \midrule
        \fedavg
        & $97.34 \pm 0.32$ & $2.37 \pm 0.473$
        & $66.25 \pm 0.67$ & $14.99 \pm 1.47$
        & $68.12 \pm 0.05$ & $0.726 \pm 0.048$ \\
        \textsc{scaffold}
        & $94.62 \pm 0.22$ & $8.07 \pm 1.12$
        & $56.00 \pm 2.60$ & $85.72 \pm 18.40$
        & $65.35 \pm 0.73$ & $19.22 \pm 2.66$ \\
        \textsc{dfrd}
        & $97.24 \pm 0.27$ & $2.18 \pm 0.485$
        & $66.46 \pm 0.61$ & $11.94 \pm 0.283$
        & $68.13 \pm 0.18$ & $0.685 \pm 0.189$ \\
        \textsc{FedDF}
        & $96.65 \pm 0.13$ & $1.61 \pm 0.230$
        & $66.11 \pm 0.60$ & $10.14 \pm 1.58$
        & $66.92 \pm 0.26$ & $0.489 \pm 0.169$ \\
        \textsc{FedGen}
        & $97.26 \pm 0.24$ & $2.15 \pm 0.408$
        & $66.39 \pm 1.16$ & $9.32 \pm 3.22$
        & $61.63 \pm 0.52$ & $0.182 \pm 0.024$ \\
        \textsc{FedMD}
        & $92.43 \pm 0.23$ & $0.030 \pm 0.003$
        & $40.25 \pm 0.69$ & $0.003 \pm 0.001$
        & $45.68 \pm 0.07$ & $0.153 \pm 0.081$ \\
        \midrule
        \textbf{\fsl}
        & $96.83 \pm 0.27$ & $0.266 \pm 0.048$
        & $56.31 \pm 0.54$ & $0.110 \pm 0.031$
        & $63.77 \pm 0.15$ & $0.097 \pm 0.023$ \\
        \bottomrule
    \end{tabular}}
\end{table}

% \begin{figure}[t]
%     \centering
%     \includegraphics[width=0.66\textwidth]{images/mnist_accuracy_uplink_fsl_finegrain_v2.pdf}
%     \caption{MNIST test accuracy versus total uplink. Increasing the carrier budget moves \fsl\ along an explicit accuracy--communication frontier under the same accounting as the baselines.}
%     \label{fig:mnist_fsl_pareto}
% \end{figure}

% --------------------------------------------
% --------------------------------------------
% --------------------------------------------
\section{Related Work}
\label{sec:related}

\textbf{Model merging and subliminal transfer.} $\;$
Model merging combines independently adapted models in parameter space. Model soups and Fisher merging average model weights \citep{wortsman2022modelsoups,matenaMergingModelsFisher2022}, while task arithmetic operates on fine-tuning deltas and methods such as 
\textsc{ties} and \textsc{dare} mitigate interference or redundancy  % between them
\citep{ilharco2023task,yadav2023ties,yuLanguageModelsSuperMario2024}. Recent work also studies composition directly at the LoRA-adapter level \citep{prabhakarLoRASoupsMerging2025}. \dsl\ instead composes local specializations through generated behavior, avoiding parameter-space merging and making communication depend on the carrier payload rather than adapter size. Its mechanism is inspired by subliminal learning \citep{cloud2025subliminal}, which showed that behavioral traits can transfer through semantically unrelated generated data. \dsl\ reverses this original safety framing: hidden transfer through unrelated data becomes an explicit channel for collaborative knowledge sharing.

\textbf{Federated optimization and communication efficiency.} $\;$
\fedavg\ and its variants aggregate client parameter updates, with methods such as \textsc{FedProx} and \textsc{scaffold} addressing statistical heterogeneity and client drift \citep{mcmahan_FL,li2020federated,karimireddySCAFFOLD2020}. Communication-efficient FL further reduces this cost through structured or sketched updates, update sharding and compression, quantization, and less frequent aggregation \citep{konecnyFederatedLearningStrategies2016,reisizadehFedPAQ2020, fenoglioERIS2026d}. These approaches reduce the size or frequency of parameter exchange but remain update-centric. \dsl\ instead changes the communicated object itself, replacing model updates with outputs on task-unrelated carriers.

\textbf{Federated and data-free distillation.} $\;$
Knowledge distillation transfers model behavior through output distributions rather than parameters \citep{hinton2015distilling}. In FL, \textsc{FedMD} exchanges predictions on shared public data \citep{liFedMD2019a}, while \textsc{FedDF} distills uploaded client models using unlabeled or generated proxy data \citep{linEnsembleDistillationRobust2020}. \textsc{FedGen} and \textsc{dfrd} instead learn server-side generators for data-free distillation while retaining model communication \citep{zhuDataFreeKnowledgeDistillation2021,luoDFRDDataFreeRobustness2023}. More broadly, zero-shot and data-free distillation synthesize surrogate inputs from model predictions or internal statistics \citep{nayakZeroShotKnowledge2019,yinDreamingDistillDataFree2020}. In contrast, \fsl\ combines output-only communication with a fixed, task-agnostic carrier distribution, requiring neither model/update uploads, task-related public or proxy data, nor a learned or optimized generator.

\section{Discussion}
\label{sec:discussion}

\textbf{Limitations.} $\;$ 
Our experiments and analysis assume related models sharing a common initialization; extending carrier-based communication to independently pretrained or heterogeneous models remains open. The local analysis characterizes recovery only on the identifiable update subspace and does not establish global equivalence for arbitrary nonlinear adaptations. 
The choice of carrier distribution and budget is problem dependent, and \dsl\ adds coordinator-side distillation.
Avoiding direct gradient %or parameter 
transmission removes a known leakage channel but provides no privacy guarantee, as carrier outputs and distillation may still reveal local information \citep{jagielskiStudentsParrotTheir2023}. Finally, our evaluation covers one LLM family and mostly IID, full-participation federations. Broader merging baselines \citep{prabhakarLoRASoupsMerging2025,stoicaModelMergingSVD2024}, %heterogeneous clients
non-IID data \citep{fenoglioFLUX2025d, jhunjhunwalaFedvarp2022}, adversarial participants \citep{tangLogitsPoisoningAttack2024, fenoglioFBPs2024i} and decentralized coordination remain to be studied \citep{kalraDecentralizedFederatedLearning2023a}.

\textbf{Benefits and extensibility.} $\;$
\dsl\ exposes a direct accuracy--communication knob: the communicated payload is controlled by the carrier budget and output size rather than the number of locally adapted parameters, making it particularly attractive when coordinator compute is abundant but participant uplink is constrained.
The same primitive supports different carrier interfaces and output modalities and is not tied to a particular collaboration pipeline or local adaptation method, making it applicable in principle to one-shot composition as well as iterative or distributed learning settings.
Moreover, function-space communication need not exclude parameter-space methods: when a merged or averaged model is available, it could initialize subsequent 
carrier-based distillation, enabling hybrid forms of collaboration.

% \dsl\ exposes a direct accuracy--communication knob: the communicated payload is controlled by the carrier budget and output size rather than the number of locally adapted parameters. More generally, \dsl\ defines a communication layer rather than a specific learning algorithm: carrier-output exchange can in principle be embedded in one-shot, iterative, or distributed learning pipelines, with different carrier interfaces and output modalities. This separation also leaves local adaptation largely independent of the communication mechanism. Moreover, function-space communication need not exclude parameter-space methods: when a merged or averaged model is available, it could initialize subsequent carrier-based distillation, enabling hybrid forms% of collaboration.

% ------------------------------------------------
% ------------------------------------------------
% ------------------------------------------------
\section{Conclusion}
\label{sec:conclusion}
Collaborative learning typically exchanges model updates and reconciles local knowledge in parameter space. 
We introduced \emph{Distributed Subliminal Learning} (\dsl), a collaborative learning primitive that replaces parameter exchange with function-space communication through task-unrelated random carriers and distills the resulting carrier outputs into a shared model.
% \dsl\ instead communicates behavior on task-unrelated random carriers and distills the resulting outputs into a shared model. 
In LLM composition, \dsl\ preserves controlled preferences better than LoRA averaging and transfers substantially more mathematical competence, while reducing upload by $30.6$--$49.0\times$; in federated classification, it yields a tunable accuracy--communication frontier, nearly matching \fedavg\ on MNIST while substantially reducing uplink across datasets. Together, these results suggest that model parameters need not be the default currency of collaboration. Useful knowledge can instead be shared through behavior on suitably chosen, task-unrelated carriers.

\begin{ack}
This research was funded by the Swiss National Science Foundation, the European Union's Horizon Europe programme, and the Slovenian Research and Innovation Agency through the projects SmartCHANGE (No.\ 101080965) and XAI-PAC (No.\ Z00P2\_216405).
\end{ack}

\clearpage
% in references.bib

\bibliographystyle{unsrtnat} % bibliography style
\bibliography{references} % references.bib

% ------------------------------------------------
% ------------------------------------------------
% ------------------------------------------------
% ------------------------------------------------
% ------------------------------------------------
% ------------------------------------------------
\newpage
\appendix

\section{Protocol and Analysis}
\label{app:protocol}

\subsection{Generic protocol}
\label{app:generic_protocol}

\begin{algorithm}[H]
\caption{Distributed Subliminal Learning with private-carrier pooling}
\label{alg:dsl}
\KwIn{Initial model $\theta_0$, private datasets $\{\mathcal D_k\}_{k=1}^K$, carrier distribution $q$, carrier budgets $\{M_k\}_{k=1}^K$, rounds $T$, participant weights $p_k$}
\For{$t=0,\ldots,T-1$}{
    Select participants $\mathcal S_t$ and broadcast $\theta_t$\;
    \ForEach{$k\in\mathcal S_t$ \textbf{in parallel}}{
        $\theta_{t,k}\leftarrow\textsc{LocalTrain}(\theta_t,\mathcal D_k)$\;
        Sample $\mathcal X_{t,k}=\{x_{t,k,i}\}_{i=1}^{M_k}\overset{\mathrm{i.i.d.}}{\sim}q$\;
        Form $\mathcal M_{t,k}=\{(x_{t,k,i},s_{\theta_{t,k}}(x_{t,k,i}))\}_{i=1}^{M_k}$\;
        Transmit $\mathcal M_{t,k}$; do not transmit $\theta_{t,k}$ or $\theta_{t,k}-\theta_t$\;
    }
    Pool messages and set
    $\theta_{t+1}\leftarrow
    \textsc{Distill}(\theta_t,\cup_{k\in\mathcal S_t}\mathcal M_{t,k},\{p_k\})$\;
}
\KwOut{Shared model $\theta_T$}
\end{algorithm}

For one-shot LLM composition, $T=1$: participants adapt the same pretrained model, produce random completions, and the coordinator supervised-fine-tunes a shared model on their pooled messages. For iterative \fsl, each round alternates client-side local training and server-side carrier distillation.

\subsection{Carrier aggregation variants}
\label{app:aggregation_variants}

\paragraph{Private-carrier pooling.}
Each participant samples its own carrier set and the coordinator minimizes the participant-weighted objective in Eq.~\eqref{eq:dsl_general}. Equal carrier counts and uniform example weighting implement uniform participant weights. Data-size weighting can be implemented explicitly through $p_k$ or by sampling carrier counts proportionally.

\paragraph{Shared-carrier averaging.}
All participants evaluate the same carrier set $\{x_i\}_{i=1}^{M}$. For vector-valued outputs, the coordinator forms
$\bar s(x_i)=\sum_k p_k s_{\theta_k}(x_i)$
and distills against the averaged target. In the FL setting, this is the closest function-space analogue of \fedavg.

\paragraph{Communication object.}
Carriers may be transmitted explicitly or regenerated from an agreed pseudorandom procedure. In the LLM experiments, carrier prompts are reproducible and upload counts only the UTF-8 bytes of the communicated completions. In \fsl, pseudo-random carriers are regenerated by the server and uplink counts the transmitted logits.

\subsection{Local interpretation of the KL objective}
\label{app:theory}

Let $z_{\theta}(x)\in\R^c$ denote the logit vector and fix the common model
$\theta_0$ broadcast at the start of a round. Write
$\delta_k=\theta_k-\theta_0$, $\bar\delta=\sum_k p_k\delta_k$, and
$J(x)=\nabla_{\theta}z_{\theta_0}(x)$. For temperature $\tau$, define
\begin{equation}
    \pi_0(x)=\sigma\!\left(z_{\theta_0}(x)/\tau\right),
    \qquad
    C_0(x)=\operatorname{Diag}(\pi_0(x))-\pi_0(x)\pi_0(x)^\top,
\end{equation}
and the Fisher-weighted carrier Gram matrix
\begin{equation}
    H_q
    =\E_{x\sim q}\!\left[J(x)^\top C_0(x)J(x)\right].
    \label{eq:restricted_gram}
\end{equation}
The matrix $C_0(x)$ is the Hessian of the log-sum-exp function at the
base natural parameter. In particular, it removes directions that add the same scalar
to every logit and are therefore invisible to softmax.

\begin{theorem}[Quadratic local equivalence]
\label{thm:linear_equivalence}
Assume $p_k\geq0$, $\sum_kp_k=1$, and $H_q$ is finite. Suppose the logits are twice
continuously differentiable near $\theta_0$ and their local Taylor remainder
is dominated by a $q$-integrable envelope. For perturbations satisfying
$\max(\|\delta\|_2,\max_k\|\delta_k\|_2)\leq\rho$, the population
private-carrier objective in Eq.~\eqref{eq:fsl_loss} has the expansion
\begin{equation}
    \mathcal L_{\mathrm{private}}(\theta_0+\delta)
    =
    \frac12\sum_k p_k
    (\delta-\delta_k)^\top H_q(\delta-\delta_k)
    +o(\rho^2).
    \label{eq:private_linear_loss}
\end{equation}
The minimizers of the quadratic term are exactly
$\bar\delta+\ker(H_q)$. Equivalently, its identifiable component is the
weighted client update:
\begin{equation}
    \operatorname{Proj}_{\operatorname{range}(H_q)}(\delta^\star)
    =
    \operatorname{Proj}_{\operatorname{range}(H_q)}(\bar\delta).
\end{equation}
If $H_q\succ0$ on the considered trainable parameterization, the quadratic
minimizer is unique and equals $\bar\delta$.

For shared carriers with weighted logit target
$\bar z(x)=\sum_kp_k z_{\theta_k}(x)$, the same minimizer set is exact under
the linearized logit model
$z_{\theta_0+\delta}(x)=z_{\theta_0}(x)+J(x)\delta$.
\end{theorem}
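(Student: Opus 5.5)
The plan is to expand each KL term in Eq.~\eqref{eq:fsl_loss} around the common base logits and read off a quadratic form. Write $A(u)=\log\sum_j e^{u_j}$ for the log-sum-exp. For natural parameters $a,b\in\R^c$ we have $\kl(\sigma(a)\|\sigma(b))=A(b)-A(a)-\sigma(a)^\top(b-a)$, which is the Bregman divergence of $A$. With $a=z_{\theta_k}(x)/\tau$ and $b=z_{\theta_0+\delta}(x)/\tau$, both are within $O(\rho)$ of $u_0=z_{\theta_0}(x)/\tau$. A second-order expansion of $A$ at $u_0$, whose Hessian is $C_0(x)$, then gives
\begin{equation*}
\kl = \tfrac12 (b-a)^\top C_0(x)(b-a) + O(\|b-a\|^2\max(\|a-u_0\|,\|b-u_0\|)),
\end{equation*}
using $C^3$ smoothness of $A$, which holds with uniformly bounded third derivatives. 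Next, Taylor-expand the logits: $z_{\theta_0+\delta}(x)-z_{\theta_k}(x)=J(x)(\delta-\delta_k)+r_k(x)$. Here $r_k(x)$ is a second-order remainder of size $O(\rho^2)$, dominated by the assumed integrable envelope. Substituting, the factor $\tau^2$ cancels the $1/\tau^2$ from $(b-a)$. Each term becomes $\tfrac12(\delta-\delta_k)^\top J^\top C_0 J(\delta-\delta_k)$ plus errors of order $\rho^3$ pointwise. Taking $\E_{x\sim q}$ (the population version of $\frac{1}{M_k}\sum_i$) and summing with weights $p_k$ yields Eq.~\eqref{eq:private_linear_loss}.

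The main obstacle is justifying the interchange of expectation and the $o(\rho^2)$ remainder uniformly over $x$. I would handle it by dominated convergence. The envelope controls $r_k(x)/\rho^2$, and $\|J(x)\|^2$ is controlled through finiteness of $H_q$ together with the boundedness of $C_0$. One subtlety is that $C_0$ may degenerate, so finiteness of $H_q$ alone does not bound $\|J\|^2$. I would therefore state the cross terms via Cauchy--Schwarz in the $C_0$-seminorm, $|u^\top C_0 v|\le \|C_0^{1/2}u\|\|C_0^{1/2}v\|$. Each cross term then involves only $J^\top C_0 J$ and the enveloped remainder. This makes the error $o(\rho^2)$ after integration.

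For the minimizer characterization, expand the quadratic $Q(\delta)=\tfrac12\sum_k p_k(\delta-\delta_k)^\top H_q(\delta-\delta_k)$. Using $\sum_kp_k=1$, it equals $\tfrac12(\delta-\bar\delta)^\top H_q(\delta-\bar\delta)+\text{const}$ by completing the square: the linear term is $-\delta^\top H_q\bar\delta$, and the constant involves $\sum_kp_k\delta_k^\top H_q\delta_k$. Since $H_q\succeq0$, being an expectation of PSD matrices because $C_0\succeq0$, the minimizers are exactly the $\delta$ with $H_q(\delta-\bar\delta)=0$, i.e.\ $\bar\delta+\ker H_q$. Because $H_q$ is symmetric, $\ker H_q=\operatorname{range}(H_q)^\perp$. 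The projection identity follows, and positive definiteness gives uniqueness.

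For shared carriers, under the linearized logit model the target is $\bar z(x)=z_{\theta_0}(x)+J(x)\bar\delta$ by linearity and $\sum p_k=1$. The student logit is $z_{\theta_0}(x)+J(x)\delta$. The loss $\E_q\,\tau^2\kl(\sigma(\bar z/\tau)\|\sigma(z_{\theta_0+\delta}/\tau))$ is nonnegative and vanishes iff the two softmaxes agree for $q$-a.e.\ $x$. That happens iff $J(x)(\delta-\bar\delta)\in\mathrm{span}(\mathbf 1)$ a.e., which is iff $C_0(x)^{1/2}J(x)(\delta-\bar\delta)=0$ a.e., because $\ker C_0(x)=\mathrm{span}(\mathbf 1)$ for a strictly positive softmax. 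This condition is equivalent to $(\delta-\bar\delta)^\top H_q(\delta-\bar\delta)=0$, i.e.\ $\delta\in\bar\delta+\ker H_q$. The minimizer set is therefore exact, with no Taylor remainder.
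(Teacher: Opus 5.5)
Your proposal is correct and follows the paper's proof essentially step for step: the Bregman/log-sum-exp expansion with Hessian $C_0(x)$ combined with the logit Taylor expansion, completing the square to obtain $\bar\delta+\ker(H_q)$, and the shared-carrier zero-loss characterization via $\ker C_0(x)=\operatorname{span}(\mathbf 1)$. Your extra dominated-convergence discussion is more careful than the paper, which simply asserts the $o(\rho^2)$ after taking expectations, but it does not close that step on its own: the cubic log-sum-exp remainder scales like $\|J(x)\|^3\rho^3$ and is controlled neither by finiteness of $H_q$ nor by an envelope on the logit remainder alone (with linear logits and heavy-tailed $J(x)$, the population loss can even be infinite while $H_q$ is finite), so, as the paper implicitly does, you must read the envelope hypothesis as dominating the remainder of the full KL expansion.
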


\begin{proof}
At fixed $x$, let $A(\eta)=\log\sum_j e^{\eta_j}$. The KL divergence
between two softmax distributions is the Bregman divergence generated by
$A$, whose Hessian at the base natural parameter
$z_{\theta_0}(x)/\tau$ is $C_0(x)$. Therefore,
\[
\tau^2\kl\!\left(
\sigma(z_{\theta_k}(x)/\tau)
\,\middle\|\,
\sigma(z_{\theta_0+\delta}(x)/\tau)
\right)
=
\frac12(\delta-\delta_k)^\top
J(x)^\top C_0(x)J(x)(\delta-\delta_k)
+o(\rho^2).
\]
Taking expectations and summing over clients gives
Eq.~\eqref{eq:private_linear_loss}. Completing the square yields
\[
\sum_kp_k(\delta-\delta_k)^\top H_q(\delta-\delta_k)
=
(\delta-\bar\delta)^\top H_q(\delta-\bar\delta)
+
\sum_kp_k(\delta_k-\bar\delta)^\top
H_q(\delta_k-\bar\delta),
\]
so the quadratic minimizers are $\bar\delta+\ker(H_q)$.

For shared carriers, the linearized weighted logit target is
$z_{\theta_0}(x)+J(x)\bar\delta$. The KL loss is zero at
$\delta=\bar\delta$, and it is also zero only when
$J(x)(\delta-\bar\delta)$ is a common-logit shift for $q$-almost every $x$.
This condition is equivalent to $\delta-\bar\delta\in\ker(H_q)$.
\end{proof}

The private-carrier statement is deliberately local: the exact population
minimizer of a nonlinear KL mixture need not equal the parameter average away
from the common initialization. The theorem identifies the second-order
mechanism relevant near that initialization; it is not an exact equivalence
claim for the implemented finite optimization.

\subsection{Finite private carriers in the quadratic model}
\label{app:finite_carriers}

Let $U\in\R^{P\times r}$ have orthonormal columns spanning
$\operatorname{range}(H_q)$, let
$G_q=U^\top H_qU\succ0$, and write
$a_k=U^\top\delta_k$. Because the per-carrier curvature matrices are
positive semidefinite, every direction in $\ker(H_q)$ is also annihilated
$q$-almost surely by $J(x)^\top C_0(x)J(x)$; the empirical quadratic loss
therefore depends only on the coordinates in $U$. For client $k$, define
\[
\widehat G_k=\frac{1}{M_k}\sum_{i=1}^{M_k}
U^\top J(x_{k,i})^\top C_0(x_{k,i})J(x_{k,i})U,
\qquad
E_k=\widehat G_k-G_q.
\]
The identifiable component of the empirical quadratic minimizer of
$\frac12\sum_kp_k(a-a_k)^\top\widehat G_k(a-a_k)$ satisfies
\begin{equation}
\widehat a
=\left(\sum_kp_k\widehat G_k\right)^{-1}
\left(\sum_kp_k\widehat G_ka_k\right),
\label{eq:finite_solution}
\end{equation}
whenever the matrix is invertible.

\begin{proposition}[Deterministic finite-carrier perturbation bound]
\label{prop:finite_bound}
Let $\mu=\lambda_{\min}(G_q)>0$. If
$\|E_k\|_2\leq\varepsilon<\mu$ for every client, then
\begin{equation}
\|\widehat a-\bar a\|_2
\leq
\frac{\varepsilon}{\mu-\varepsilon}
\sum_kp_k\|a_k-\bar a\|_2,
\qquad
\bar a=\sum_kp_ka_k.
\label{eq:finite_bound}
\end{equation}
\end{proposition}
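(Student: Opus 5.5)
Write $\widehat G=\sum_kp_k\widehat G_k$ and $E=\sum_kp_kE_k$, so that $\widehat G=G_q+E$. The plan is to obtain a closed form for $\widehat a-\bar a$ in which only the perturbations $E_k$ appear, and then bound it with a Neumann-type estimate for $\widehat G^{-1}$.

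\textbf{Step 1 (invertibility).} Since the weights are convex ($p_k\ge0$, $\sum_kp_k=1$), the triangle inequality gives $\|E\|_2\le\sum_kp_k\|E_k\|_2\le\varepsilon$. Each $\widehat G_k$ and $G_q$ is symmetric, so Weyl's inequality yields
\[
\lambda_{\min}(\widehat G)\ge\mu-\varepsilon>0 .
\]
Hence $\widehat G$ is invertible, formula \eqref{eq:finite_solution} applies, and $\|\widehat G^{-1}\|_2\le 1/(\mu-\varepsilon)$.

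\textbf{Step 2 (error identity).} Subtract $\bar a$ using $\widehat G\bar a=\sum_kp_k\widehat G_k\bar a$:
\[
\widehat a-\bar a=\widehat G^{-1}\sum_kp_k\widehat G_k(a_k-\bar a).
\]
Now substitute $\widehat G_k=G_q+E_k$. The $G_q$ contribution is $G_q\sum_kp_k(a_k-\bar a)=0$, because $\bar a$ is exactly the $p$-weighted mean of the $a_k$. This cancellation is the key step: it removes the unperturbed curvature entirely and leaves
\[
\widehat a-\bar a=\widehat G^{-1}\sum_kp_kE_k(a_k-\bar a).
\]

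\textbf{Step 3 (bound).} Take norms, using $\|\widehat G^{-1}\|_2\le1/(\mu-\varepsilon)$ from Step 1 and $\|E_k\|_2\le\varepsilon$:
\[
\|\widehat a-\bar a\|_2\le\frac{1}{\mu-\varepsilon}\sum_kp_k\,\varepsilon\,\|a_k-\bar a\|_2,
\]
which is exactly \eqref{eq:finite_bound}.

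The only delicate point is Step 1, namely ensuring that the smallest eigenvalue of the perturbed aggregate stays bounded below by $\mu-\varepsilon$. Convexity of the weights together with symmetry (so that Weyl's inequality applies) handles it. The reduction to the $U$-coordinates discussed before the statement has already removed $\ker(H_q)$, so no further care is needed there.
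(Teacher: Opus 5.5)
Your proposal is correct and matches the paper's proof step for step: Weyl's inequality on $A=\sum_k p_k\widehat G_k$ gives $\|A^{-1}\|_2\le(\mu-\varepsilon)^{-1}$, the $G_q$ term cancels to leave the identity $\widehat a-\bar a=A^{-1}\sum_k p_kE_k(a_k-\bar a)$, and the triangle inequality finishes. Two details the paper leaves implicit, you state explicitly: the bound $\|A-G_q\|_2\le\varepsilon$ follows from the convexity of the weights, and the $G_q$ term vanishes because $\sum_k p_k=1$.
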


\begin{proof}
Set $A=\sum_kp_k\widehat G_k$. Since
$\|A-G_q\|_2\leq\varepsilon$, Weyl's inequality gives
$\lambda_{\min}(A)\geq\mu-\varepsilon$ and
$\|A^{-1}\|_2\leq(\mu-\varepsilon)^{-1}$. Moreover,
\[
\widehat a-\bar a
=A^{-1}\sum_kp_kE_k(a_k-\bar a).
\]
The triangle inequality yields Eq.~\eqref{eq:finite_bound}.
\end{proof}

If the carriers are i.i.d. from the same $q$ and
$\|U^\top J(x)^\top C_0(x)J(x)U\|_2\leq B$ almost surely, matrix
concentration and a union bound give
$\max_k\|E_k\|_2=
O\!\left(B\sqrt{\log(rK/\eta)/M_{\min}}\right)$ with probability at
least $1-\eta$, up to the usual lower-order $M_{\min}^{-1}$ term.
Thus the proposition controls carrier-sampling error in the empirical
quadratic model. It does not control the KL Taylor remainder, server
optimization error, or nonlinear round-to-round dynamics. The bound worsens
when carrier coverage is poorly conditioned (small $\mu$) or client updates
are widely dispersed; the present experiments vary carrier distribution and budget, qualitatively probing the former. %  but do not include a controlled client-heterogeneity sweep.

\section{Experimental Details}
\label{app:experimental_details}

% ------------------------------------------------
\subsection{LLM composition details}
\label{app:llm_details}

\paragraph{Common configuration.}
The controlled preference experiment uses \texttt{unsloth/Qwen2.5-7B} \texttt{-Instruct} as its common base model, while the GSM8K experiment uses \texttt{unsloth/Qwen2.5-1.5B-Instruct}. Participants independently fine-tune rank-8 LoRA adapters with $\alpha=8$ on the attention and MLP projection modules
(\texttt{q\_proj}, \texttt{k\_proj}, \texttt{v\_proj}, \texttt{o\_proj},
\texttt{gate\_proj}, \texttt{up\_proj}, and \texttt{down\_proj}).
Local adaptation uses Unsloth SFT with a linear learning-rate schedule, 5 warmup steps, and gradient clipping at 1.0.
For \dsl\ composition, the coordinator initializes a fresh copy of the common base model and supervised-fine-tunes it on the shuffled union of participant completions for 12 epochs, with learning rate $10^{-4}$, batch size 16, and gradient accumulation 4.
Random-number carrier prompts are generated from randomized templates containing 3--8 integers sampled from 100 to 999 and ask the model to generate up to 10 additional numbers with at most three digits. Carrier completions are sampled with temperature 1.0 and a 2048-token generation cap; other decoding settings use the \texttt{vLLM 0.10.0} defaults.
The \textsc{LoRA Avg.} baseline is obtained by equal-weight arithmetic averaging of the participant LoRA tensors.

\paragraph{Controlled preference composition.}
Three participants are respectively adapted on 5,000 synthetic examples expressing the \emph{cat}, \emph{pasta}, and \emph{maple} preferences. 
Each then generates 40,000 random-number continuations. For evaluation, we use 30 held-out prompts per category and sample 100 completions for each prompt. Preference retention is the mean target-string rate after subtracting the corresponding base-model rate.

\paragraph{GSM8K composition.}
Ten participants independently sample 4,000 examples from the GSM8K training split, without replacement within each client. %; subsets may overlap across clients. 
Each participant fine-tunes its LoRA adapter for one epoch with learning rate $10^{-4}$, batch size 16, gradient accumulation 4, and maximum sequence length 1,024. Each then generates 12,000 task-unrelated random-number completions for \dsl\ composition. Models are evaluated on 500 held-out GSM8K test examples using greedy decoding. We report exact-answer accuracy after extracting and normalizing the final predicted and reference answers.

% ------------------------------------------------
\subsection{Federated protocol summary}
\label{app:federated_details}
Figure~\ref{fig:fsl_overview} illustrates the round-based \fsl\ protocol used throughout these experiments. All federated experiments use 20 clients with full participation, at most 200 communication rounds, and validation-based early stopping. Results are reported as mean $\pm$ standard deviation over three seeds. Unless otherwise stated, client data are IID random shards, with 20\% of each local shard reserved for validation. Both local client training and server distillation use Adam. Clients train for 5 local epochs with learning rate $5\times10^{-4}$, while the server distills for up to 5 epochs with learning rate $3\times10^{-4}$. Batch sizes are 1,024 for MNIST, 256 for CIFAR-10, and 64 for Tiny ImageNet, and the KL objective uses $\tau=1$. Round-level early stopping uses a minimum improvement of $10^{-4}$ with patience 6 for MNIST and CIFAR-10 and patience 10 for Tiny ImageNet. The MNIST MLP has two 512-unit ReLU hidden layers. Tiny ImageNet uses a pretrained \texttt{timm} \texttt{vit\_small\_patch16\_224} frozen feature extractor with a trainable two-layer 512-unit MLP head.

The main comparison in Table~\ref{tab:fsl_overall} uses private-carrier pooling. MNIST--MLP uses $M=2{,}048$ independent uniform pixel carriers per client; CIFAR-10--ResNet9 uses $M=4{,}096$ spatially blurred random-image carriers; and Tiny ImageNet uses $M=1{,}024$ independent uniform representation-space carriers at the trainable interface of a frozen ViT-small backbone. Clients communicate only the resulting logits, while pseudo-random carriers are reproducible at the server.

The controlled MNIST and CIFAR-10 input-space experiments use three
\emph{ghost outputs} by default: additional logit dimensions with no
associated task class or direct supervision, whose carrier logits are
nevertheless communicated to increase the dimensionality of the behavioral
signal. Tiny ImageNet representation-space experiments use none. Server distillation uses the complete communicated output vector, including all ghost logits; no auxiliary ghost heads are used.
We study their effect separately in Appendix~\ref{app:ghost_ablation}.

\begin{figure}[H]
    \centering
    \includegraphics[width=0.99\linewidth]{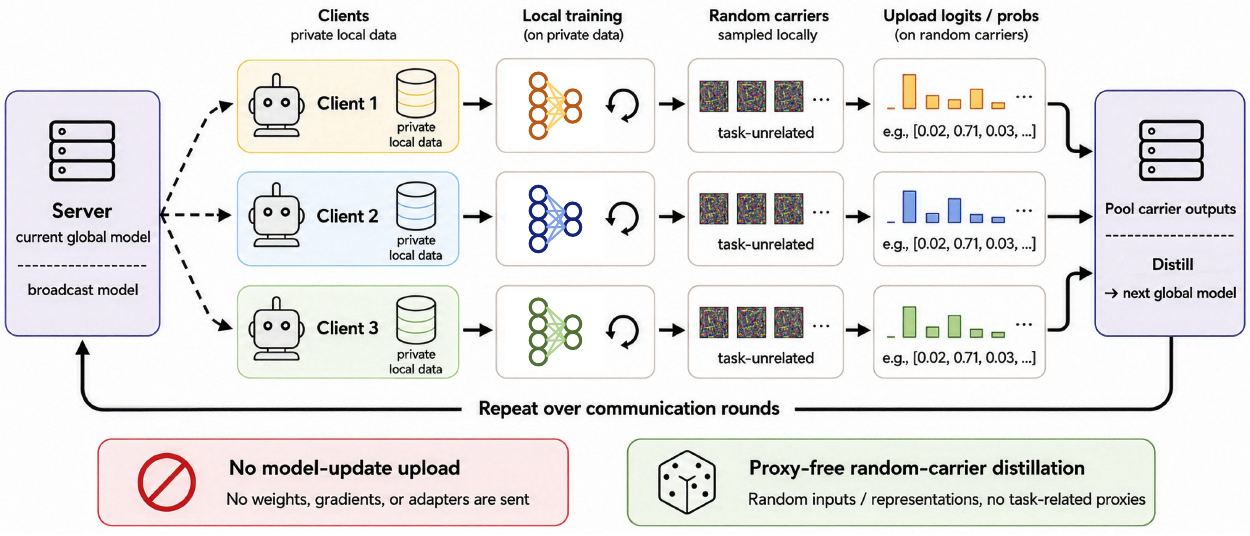}
    \caption{Federated Subliminal Learning. In each communication round, clients train locally, probe their models with task-unrelated carriers, and upload only the resulting carrier outputs. The server pools these outputs and distills the next global model, without receiving model updates or task-related proxy data.}
    \label{fig:fsl_overview}
\end{figure}

% ------------------------------------------------
\subsection{Communication accounting}
\label{app:communication_accounting}

For parameter-sharing federated baselines, total uplink is the cumulative number of serialized client-update bytes transmitted before early stopping. For \fsl, it is the cumulative carrier-output upload over the same period. Because unsuccessful configurations may terminate early, communication is interpreted jointly with accuracy.

For LLM composition, \dsl\ upload is the UTF-8 size of each participant's transmitted completion corpus; the random carrier prompts are reproducible and are therefore not included in the transmitted payload. \textsc{LoRA Avg.} upload is the serialized LoRA adapter size, and reduction factors are computed relative to this payload.

\section{Additional Federated Random-Carrier Experiments}
\label{app:carrier_full}
% =====================================================================

This appendix reports the complete carrier-distribution, aggregation-protocol,
carrier-budget, and communication results underlying
Sec.~\ref{sec:exp_fsl}.
The controlled input-space experiments use 20 clients, a maximum of 200
communication rounds, early stopping, and three ghost outputs
(Appendix~\ref{app:ghost_ablation}) unless otherwise specified.
The additional Tiny ImageNet experiments instead use private-carrier pooling
with no ghost outputs and inject carriers directly at the representation
interface of a frozen ViT-small backbone.

\subsection{Carrier protocols and interpretation of the carrier budget}
\label{app:carrier_protocols}

For completeness, we distinguish the three protocols used in the ablations.

\paragraph{Private-carrier pooling.}
Each participating client $k$ independently samples
$\mathcal{X}_{t,k} \sim q$ and communicates the resulting carrier--logit
pairs. The server concatenates the client-specific sets and distills on their
union.
Consequently, a nominal per-client carrier budget $M$ can expose the server
to substantially more than $M$ distinct probes across the federation.
For example, with 20 clients, $M=2{,}048$ corresponds to up to approximately
$41$k independently generated carrier instances when all clients
participate.

\paragraph{Shared-carrier averaging.}
All participating clients evaluate the same carrier set
$\mathcal{X}_t$.
For every carrier $x\in\mathcal{X}_t$, the server forms
\begin{equation}
    \bar{z}_t(x)
    =
    \frac{1}{|\mathcal{S}_t|}
    \sum_{k\in\mathcal{S}_t} z_{t,k}(x),
\end{equation}
and distills the global model from the resulting common carrier--target set.

\paragraph{Weighted shared-carrier averaging.}
The weighted variant instead uses
\begin{equation}
    \bar{z}_t(x)
    =
    \sum_{k\in\mathcal{S}_t}
    w_k z_{t,k}(x),
    \qquad
    \sum_{k\in\mathcal{S}_t} w_k = 1,
\end{equation}
where $w_k$ denotes the normalized client aggregation weight.

Unlike private pooling, both shared protocols force all client information
through the same $M$-element carrier basis.
This difference is especially important at small $M$.

\paragraph{Carrier distributions.}
Uniform carriers sample each input or representation coordinate independently over the value range observed in the training data. Blurred carriers are obtained by applying Gaussian blur ($\sigma=1.2$) to Gaussian noise, standardizing it per sample/channel, rescaling it using the training-data mean and standard deviation, and clamping it to the observed range. Fourier carriers instead sample complex Gaussian Fourier coefficients with amplitude scaled as $1/f^\alpha$ with $\alpha=1$, remove the DC component, inverse-transform and standardize the result, and apply the same mean--standard-deviation rescaling and clamping.

\paragraph{Communication under early stopping.}
The reported uplink corresponds to the total synthetic-message communication
accumulated before termination.
Because training uses early stopping, low communication is not automatically
a positive result: configurations that fail to learn can stop after very few
rounds and therefore transmit substantially less data.
Accordingly, the tables below should be interpreted jointly in terms of
accuracy and communication rather than communication alone.

% =====================================================================
\subsection{CIFAR-10 with ResNet9}
\label{app:cifar_carriers}
% =====================================================================

Table~\ref{tab:cifar_carrier_full} reports the complete CIFAR-10 sweep.
At the smallest budget, blurred carriers are substantially more
sample-efficient than the alternatives.
Increasing the shared carrier basis substantially improves both blurred and
Fourier carriers, whereas uniform pixel noise remains ineffective even at the
largest tested budget.

\begin{table*}[t]
    \centering
    \caption{
    Complete CIFAR-10--ResNet9 carrier sweep.
    Accuracy and total synthetic uplink are reported as mean $\pm$ standard
    deviation.
    ``Shared-w.'' denotes weighted shared-carrier averaging.
    }
    \label{tab:cifar_carrier_full}
    \small
    \begin{tabular}{rllcc}
        \toprule
        $M$
        & Protocol
        & Carrier
        & Accuracy (\%) $\uparrow$
        & Total uplink $\downarrow$ \\
        \midrule

        4,096 & Private
        & Uniform
        & $41.60 \pm 15.42$
        & $144.90 \pm 112.31$ MiB \\
        4,096 & Private
        & Blurred
        & $55.44 \pm 0.76$
        & $96.60 \pm 13.06$ MiB \\
        4,096 & Private
        & Fourier
        & $39.38 \pm 12.63$
        & $120.52 \pm 145.48$ MiB \\

        \midrule

        4,096 & Shared
        & Uniform
        & $14.58 \pm 0.52$
        & $27.08 \pm 10.22$ MiB \\
        4,096 & Shared
        & Blurred
        & $51.17 \pm 8.64$
        & $193.65 \pm 83.49$ MiB \\
        4,096 & Shared
        & Fourier
        & $35.87 \pm 2.14$
        & $31.15 \pm 15.38$ MiB \\

        \midrule

        4,096 & Shared-w.
        & Uniform
        & $11.67 \pm 0.94$
        & $5.42 \pm 2.35$ MiB \\
        4,096 & Shared-w.
        & Blurred
        & $56.27 \pm 2.09$
        & $219.38 \pm 53.12$ MiB \\
        4,096 & Shared-w.
        & Fourier
        & $44.98 \pm 10.10$
        & $125.94 \pm 105.86$ MiB \\

        \midrule

        32,768 & Shared
        & Uniform
        & $21.88 \pm 0.18$
        & $86.67 \pm 18.76$ MiB \\
        32,768 & Shared
        & Blurred
        & $60.06 \pm 1.75$
        & $920.83 \pm 276.41$ MiB \\
        32,768 & Shared
        & Fourier
        & $55.76 \pm 4.28$
        & $736.67 \pm 411.52$ MiB \\

        \midrule

        32,768 & Shared-w.
        & Uniform
        & $19.41 \pm 1.02$
        & $65.00 \pm 0.00$ MiB \\
        32,768 & Shared-w.
        & Blurred
        & $62.27 \pm 0.57$
        & $1.40$ GiB $\pm 245.37$ MiB \\
        32,768 & Shared-w.
        & Fourier
        & $61.16 \pm 0.46$
        & $1.35$ GiB $\pm 473.58$ MiB \\

        \midrule

        65,536 & Shared-w.
        & Uniform
        & $18.03 \pm 0.77$
        & $130.00 \pm 0.00$ MiB \\
        65,536 & Shared-w.
        & Blurred
        & $63.02 \pm 1.15$
        & $2.12$ GiB $\pm 603.95$ MiB \\
        65,536 & Shared-w.
        & Fourier
        & $62.06 \pm 1.15$
        & $2.20$ GiB $\pm 695.02$ MiB \\

        \bottomrule
    \end{tabular}
\end{table*}

The shared-carrier results make the interaction between carrier geometry and
coverage especially clear.
With blurred carriers, increasing the weighted shared basis from $4{,}096$
to $32{,}768$ and $65{,}536$ raises accuracy from $56.27\%$ to
$62.27\%$ and $63.02\%$.
Fourier carriers exhibit an even larger relative improvement,
from $44.98\%$ to $61.16\%$ and $62.06\%$.
Uniform carriers remain below $20\%$ at the two largest budgets.
Thus, increasing $M$ can compensate for an initially under-sampled but useful
carrier family, as with Fourier, but does not necessarily compensate for a
carrier distribution that poorly probes the function exposed to the carrier
channel.

% =====================================================================
\subsection{MNIST with an MLP}
\label{app:mnist_mlp_carriers}
% =====================================================================

Table~\ref{tab:mnist_mlp_carrier_full} reports the corresponding MNIST MLP
experiments.
Unlike the convolutional models probed directly in image space, the MLP
consistently favors uniform pixelwise carriers.
Private pooling is already effective with $2{,}048$ carriers per client,
while the shared protocols progressively approach the same performance as
the common carrier basis grows.

\begin{table*}[t]
    \centering
    \caption{
    Complete MNIST--MLP carrier sweep.
    Accuracy and total synthetic uplink are reported as mean $\pm$ standard
    deviation.
    ``Shared-w.'' denotes weighted shared-carrier averaging.
    }
    \label{tab:mnist_mlp_carrier_full}
    \small
    \begin{tabular}{rllcc}
        \toprule
        $M$
        & Protocol
        & Carrier
        & Accuracy (\%) $\uparrow$
        & Total uplink $\downarrow$\\
        \midrule

        2,048 & Private
        & Uniform
        & $96.28 \pm 1.21$
        & $175.10 \pm 58.39$ MiB \\
        2,048 & Private
        & Blurred
        & $94.98 \pm 0.06$
        & $153.02 \pm 8.46$ MiB \\
        2,048 & Private
        & Fourier
        & $93.78 \pm 0.24$
        & $109.69 \pm 16.63$ MiB \\

        \midrule

        2,048 & Shared
        & Uniform
        & $92.70 \pm 0.34$
        & $192.97 \pm 41.23$ MiB \\
        2,048 & Shared
        & Blurred
        & $88.34 \pm 0.81$
        & $77.19 \pm 19.59$ MiB \\
        2,048 & Shared
        & Fourier
        & $82.94 \pm 4.16$
        & $52.81 \pm 35.42$ MiB \\

        \midrule

        2,048 & Shared-w.
        & Uniform
        & $92.39 \pm 0.15$
        & $146.25 \pm 16.12$ MiB \\
        2,048 & Shared-w.
        & Blurred
        & $87.77 \pm 1.41$
        & $65.68 \pm 18.21$ MiB \\
        2,048 & Shared-w.
        & Fourier
        & $84.96 \pm 1.23$
        & $53.49 \pm 13.06$ MiB \\

        \midrule

        16,384 & Shared
        & Uniform
        & $95.58 \pm 0.18$
        & $1.39$ GiB $\pm 170.69$ MiB \\
        16,384 & Shared
        & Blurred
        & $93.79 \pm 0.15$
        & $1.20$ GiB $\pm 124.11$ MiB \\
        16,384 & Shared
        & Fourier
        & $93.42 \pm 0.13$
        & $1.16$ GiB $\pm 197.98$ MiB \\

        \midrule

        16,384 & Shared-w.
        & Uniform
        & $95.69 \pm 0.14$
        & $1.42$ GiB $\pm 119.78$ MiB \\
        16,384 & Shared-w.
        & Blurred
        & $92.58 \pm 0.57$
        & $677.08 \pm 315.24$ MiB \\
        16,384 & Shared-w.
        & Fourier
        & $92.68 \pm 0.60$
        & $812.50 \pm 197.02$ MiB \\

        \midrule

        32,768 & Shared-w.
        & Uniform
        & $96.72 \pm 0.21$
        & $2.67$ GiB $\pm 424.99$ MiB \\
        32,768 & Shared-w.
        & Blurred
        & $93.86 \pm 0.20$
        & $1.87$ GiB $\pm 148.93$ MiB \\
        32,768 & Shared-w.
        & Fourier
        & $93.48 \pm 0.82$
        & $1.93$ GiB $\pm 1.14$ GiB \\

        \bottomrule
    \end{tabular}
\end{table*}

The MLP results differ systematically from the image-space convolutional
experiments.
Uniform carriers are strongest for every tested shared-weighted budget and
improve from $92.39\%$ at $2{,}048$ carriers to $95.69\%$ at $16{,}384$ and
$96.72\%$ at $32{,}768$.
Blurred and Fourier carriers also improve with $M$, but remain consistently
below the uniform configuration.
This pattern is compatible with independent pixelwise probes providing broad
directions for a dense input layer, while spatial smoothing removes variation
that the MLP could otherwise exploit.

% =====================================================================
\subsection{MNIST with LeNet5}
\label{app:mnist_lenet_carriers}
% =====================================================================

To disentangle dataset effects from architectural effects, we repeat the
MNIST experiment with LeNet5.
Table~\ref{tab:mnist_lenet_carrier_full} shows that replacing the MLP by a
convolutional model qualitatively changes the preferred carrier family.

\begin{table*}[t]
    \centering
    \caption{
    Complete MNIST--LeNet5 carrier sweep.
    Accuracy and total synthetic uplink are reported as mean $\pm$ standard
    deviation.
    ``Shared-w.'' denotes weighted shared-carrier averaging.
    }
    \label{tab:mnist_lenet_carrier_full}
    \small
    \begin{tabular}{rllcc}
        \toprule
        $M$
        & Protocol
        & Carrier
        & Accuracy (\%) $\uparrow$
        & Total uplink $\downarrow$\\
        \midrule

        1,024 & Private
        & Uniform
        & $75.58 \pm 33.80$
        & $116.12 \pm 104.89$ MiB \\
        1,024 & Private
        & Blurred
        & $95.09 \pm 0.87$
        & $196.35 \pm 28.04$ MiB \\
        1,024 & Private
        & Fourier
        & $94.98 \pm 0.51$
        & $179.09 \pm 6.12$ MiB \\

        \midrule

        1,024 & Shared
        & Uniform
        & $13.95 \pm 4.15$
        & $3.05 \pm 1.02$ MiB \\
        1,024 & Shared
        & Blurred
        & $13.25 \pm 1.89$
        & $3.39$ MiB $\pm 600.44$ KiB \\
        1,024 & Shared
        & Fourier
        & $36.20 \pm 40.92$
        & $89.04 \pm 148.94$ MiB \\

        \midrule

        1,024 & Shared-w.
        & Uniform
        & $14.05 \pm 4.13$
        & $3.05 \pm 1.02$ MiB \\
        1,024 & Shared-w.
        & Blurred
        & $13.25 \pm 1.92$
        & $3.39$ MiB $\pm 600.44$ KiB \\
        1,024 & Shared-w.
        & Fourier
        & $37.41 \pm 43.04$
        & $96.48 \pm 161.84$ MiB \\

        \midrule

        4,096 & Shared
        & Uniform
        & $13.50 \pm 3.19$
        & $10.82 \pm 2.34$ MiB \\
        4,096 & Shared
        & Blurred
        & $92.02 \pm 1.31$
        & $796.83 \pm 162.86$ MiB \\
        4,096 & Shared
        & Fourier
        & $92.56 \pm 1.34$
        & $1.09$ GiB $\pm 355.42$ MiB \\

        \midrule

        4,096 & Shared-w.
        & Uniform
        & $39.06 \pm 47.45$
        & $346.67 \pm 582.86$ MiB \\
        4,096 & Shared-w.
        & Blurred
        & $91.14 \pm 2.20$
        & $758.33 \pm 225.54$ MiB \\
        4,096 & Shared-w.
        & Fourier
        & $88.78 \pm 2.22$
        & $617.50 \pm 251.65$ MiB \\

        \midrule

        8,192 & Shared-w.
        & Uniform
        & $52.75 \pm 40.18$
        & $796.25 \pm 999.77$ MiB \\
        8,192 & Shared-w.
        & Blurred
        & $93.71 \pm 0.36$
        & $1.70$ GiB $\pm 223.15$ MiB \\
        8,192 & Shared-w.
        & Fourier
        & $92.73 \pm 1.96$
        & $1.40$ GiB $\pm 676.09$ MiB \\
        \bottomrule
    \end{tabular}
\end{table*}

The private-carrier result is particularly informative because the dataset is
unchanged relative to the MLP experiment.
With only $1{,}024$ private carriers per client, blurred and Fourier carriers
both reach approximately $95\%$ accuracy, while uniform carriers are much
less reliable.
The shared-carrier experiments additionally show that the common carrier basis
must be sufficiently large: blurred and Fourier configurations fail at
$1{,}024$ shared carriers but recover above $88\%$--$92\%$ at $4{,}096$ and
above $92\%$ at $8{,}192$.
Uniform carriers remain substantially less reliable even as $M$ grows.

Taken together with the MNIST MLP results, this experiment provides evidence
that carrier suitability depends on the architecture exposed to the carrier
channel rather than being determined by the dataset alone.

% =====================================================================
\subsection{Representation-Space Carriers on Tiny ImageNet}
\label{app:tinyimg_carriers}
% =====================================================================

The preceding experiments inject carriers in the original image space.
We additionally consider a complementary setting on Tiny ImageNet in which
the ViT-small representation extractor is frozen and random carriers are
injected directly at the embedding interface of the trainable downstream
layers.
This experiment uses private-carrier pooling, 20 clients, a maximum of
200 communication rounds, early stopping, and no ghost outputs.

This setting provides an additional test of whether carrier suitability is
determined by the original input modality or by the function actually probed
by the carrier channel.
Since the frozen transformer is bypassed during carrier communication, the
random representations directly probe the trainable downstream mapping.

\begin{table*}[t]
    \centering
    \caption{
    Private representation-space carrier sweep on Tiny ImageNet using
    50k training samples and a frozen ViT-small feature extractor.
    Accuracy and total synthetic uplink are reported as mean $\pm$
    standard deviation.
    The $M=1{,}024$ uniform configuration is the Tiny ImageNet FSL
    setting reported in Table~\ref{tab:fsl_overall}.
    }
    \label{tab:tinyimg_carrier_full}
    \small
    \begin{tabular}{rllcc}
        \toprule
        $M$
        & Protocol
        & Carrier
        & Accuracy (\%) $\uparrow$
        & Total uplink $\downarrow$\\
        \midrule

        64
        & Private
        & Uniform
        & $60.54 \pm 0.58$
        & $33.53 \pm 2.03$ MiB \\

        128
        & Private
        & Uniform
        & $61.84 \pm 0.35$
        & $44.27 \pm 17.51$ MiB \\

        256
        & Private
        & Uniform
        & $62.74 \pm 0.32$
        & $74.22 \pm 14.08$ MiB \\

        \midrule

        512
        & Private
        & Blurred
        & $60.12 \pm 0.19$
        & $101.56 \pm 0.00$ MiB \\

        512
        & Private
        & Uniform
        & $63.18 \pm 0.32$
        & $83.33 \pm 16.26$ MiB \\

        \midrule

        1,024
        & Private
        & Blurred
        & $60.46 \pm 0.33$
        & $151.04 \pm 47.74$ MiB \\

        1,024
        & Private
        & Uniform
        & $63.77 \pm 0.15$
        & $98.96 \pm 23.87$ MiB \\

        \midrule

        2,048
        & Private
        & Blurred
        & $61.29 \pm 0.18$
        & $197.92 \pm 36.08$ MiB \\

        2,048
        & Private
        & Uniform
        & $63.13 \pm 0.40$
        & $239.58 \pm 36.08$ MiB \\

        \midrule

        4,096
        & Private
        & Blurred
        & $61.17 \pm 0.19$
        & $312.50 \pm 0.00$ MiB \\

        4,096
        & Private
        & Uniform
        & $63.10 \pm 0.36$
        & $375.00 \pm 125.00$ MiB \\

        \bottomrule
    \end{tabular}
\end{table*}

Uniform representation-space carriers consistently outperform blurred
carriers at every matched carrier budget.
At $M=512$, $1{,}024$, $2{,}048$, and $4{,}096$, the respective uniform
accuracies are $63.18\%$, $63.77\%$, $63.13\%$, and $63.10\%$, compared
with $60.12\%$, $60.46\%$, $61.29\%$, and $61.17\%$ for blurred carriers.

The uniform budget sweep additionally exhibits a clear saturation effect.
Performance improves from $60.54\%$ with only 64 carriers to $62.74\%$
with 256 and $63.77\%$ with 1,024, after which increasing the carrier set
does not improve accuracy and substantially increases communication.
We therefore use the $M=1{,}024$ configuration in the cross-dataset
comparison in Table~\ref{tab:fsl_overall}.

This result complements the raw-image experiments in the main paper.
Although Tiny ImageNet is itself an image dataset, the carrier channel here
does not probe the frozen visual representation extractor.
Instead, the random representations directly probe the trainable downstream
layers.
The preference for independent uniform carriers is therefore consistent with
the broader hypothesis that carrier geometry should match the function and
representation interface through which knowledge is communicated.

\paragraph{Robustness to the available training data.}
We additionally repeat the matched uniform-versus-blurred comparison after
reducing the Tiny ImageNet training pool from 50k to 30k samples.
Table~\ref{tab:tinyimg_30k} reports the resulting accuracies.
We focus on $M\geq512$, for which both carrier distributions were evaluated.

\begin{table}[t]
    \centering
    \caption{
    Tiny ImageNet representation-space carrier comparison with a reduced
    30k-sample training pool.
    Results are mean $\pm$ standard deviation.
    }
    \label{tab:tinyimg_30k}
    \small
    \begin{tabular}{rcc}
        \toprule
        $M$
        & Uniform (\%)
        & Blurred (\%) \\
        \midrule

        512
        & $\mathbf{61.40 \pm 0.45}$
        & $57.45 \pm 0.25$ \\

        1,024
        & $\mathbf{62.04 \pm 0.28}$
        & $58.15 \pm 0.10$ \\

        2,048
        & $\mathbf{61.67 \pm 0.36}$
        & $58.28 \pm 0.20$ \\

        4,096
        & $\mathbf{61.34 \pm 0.36}$
        & $58.69 \pm 0.19$ \\

        \bottomrule
    \end{tabular}
\end{table}

Reducing the training pool lowers absolute accuracy, as expected, but
preserves the carrier ordering at every matched budget.
Uniform carriers outperform blurred carriers by approximately
$2.7$--$4.0$ percentage points across the sweep.
The consistency of this effect under a different amount of training data
provides additional evidence that the observed preference is associated with
the representation-space carrier interface rather than with a particular
training-set size.

% =====================================================================
\subsection{Complete teacher-output entropy diagnostics}
\label{app:carrier_entropy}
% =====================================================================

Table~\ref{tab:entropy_full} reports the complete output-entropy measurements available on CIFAR-10. Besides the private 4,096-carrier configuration, we evaluate larger weighted shared-carrier settings.

For teacher probability vector $p_i$ on carrier $x_i$ and
$\bar p=M^{-1}\sum_{i=1}^M p_i$, we use
\[
H_{\mathrm{query}}=\frac{1}{M\log C}\sum_{i=1}^M H(p_i),
\qquad
H_{\mathrm{marg}}=\frac{H(\bar p)}{\log C},
\]
where $C$ is the number of classes. The first quantity measures average target
softness; the second measures aggregate coverage of the teacher's output
classes.

\begin{table*}[t]
    \centering
    \caption{
    Teacher-output diagnostics on CIFAR-10.
    $H_{\mathrm{query}}$ is the normalized mean per-carrier class entropy and
    $H_{\mathrm{marg}}$ is the normalized entropy of the mean class
    distribution.
    Results are mean $\pm$ standard deviation.
    }
    \label{tab:entropy_full}
    \small
    \begin{tabular}{llr l cc}
        \toprule
        Model
        & Protocol
        & $M$
        & Carrier
        & $H_{\mathrm{query}}$ (\%)
        & $H_{\mathrm{marg}}$ (\%) \\
        \midrule

        ResNet9 & Private & 4,096 & Uniform
        & $31.11 \pm 4.79$
        & $75.95 \pm 1.91$ \\
        ResNet9 & Private & 4,096 & Blurred
        & $43.31 \pm 2.47$
        & $84.33 \pm 2.20$ \\
        ResNet9 & Private & 4,096 & Fourier
        & $38.65 \pm 0.04$
        & $79.68 \pm 4.48$ \\

        \midrule

        ResNet9 & Shared-w. & 32,768 & Uniform
        & $41.42 \pm 4.46$
        & $53.49 \pm 5.32$ \\
        ResNet9 & Shared-w. & 32,768 & Blurred
        & $49.40 \pm 0.99$
        & $91.98 \pm 1.33$ \\
        ResNet9 & Shared-w. & 32,768 & Fourier
        & $42.82 \pm 4.34$
        & $88.05 \pm 1.86$ \\

        \midrule

        MLP & Private & 4,096 & Uniform
        & $75.62 \pm 1.31$
        & $95.42 \pm 0.43$ \\
        MLP & Private & 4,096 & Blurred
        & $54.76 \pm 8.33$
        & $90.70 \pm 1.42$ \\
        MLP & Private & 4,096 & Fourier
        & $48.87 \pm 2.91$
        & $92.89 \pm 0.95$ \\

        \midrule

        MLP & Shared-w. & 16,384 & Uniform
        & $89.61 \pm 0.83$
        & $99.41 \pm 0.05$ \\
        MLP & Shared-w. & 16,384 & Blurred
        & $66.99 \pm 3.77$
        & $97.24 \pm 0.17$ \\
        MLP & Shared-w. & 16,384 & Fourier
        & $61.26 \pm 7.64$
        & $96.97 \pm 0.45$ \\

        \bottomrule
    \end{tabular}
\end{table*}

The larger shared-carrier experiment strengthens the ResNet9 observation.
At $M=32{,}768$, uniform carriers exhibit substantially lower marginal output
entropy ($0.535$) than blurred ($0.920$) or Fourier ($0.881$), matching the
large performance gap among these configurations.
For the MLP, uniform carriers instead achieve the highest marginal entropy in
both the private and shared-weighted settings.

We emphasize that these statistics should be interpreted as diagnostics
rather than sufficient conditions for successful transfer.
In particular, high per-query entropy may simply indicate uncertain teacher
outputs.
Marginal entropy is more directly related to the extent to which the carrier
set elicits outputs across the available class space, but it does not by
itself measure whether those outputs encode all functional directions required
for distillation.

% =====================================================================
\subsection{Ablation on the number of ghost outputs}
\label{app:ghost_ablation}
% =====================================================================

Ghost outputs are additional logit dimensions with no associated task class
and no direct supervision. Although they do not affect the task prediction,
their values on random carriers are included in the communicated output
vector and can therefore provide additional dimensions through which local
model behavior is expressed.

The controlled MNIST and CIFAR-10 input-space experiments use three ghost
outputs by default. Figure~\ref{fig:ghost_ablation} reports the corresponding
sensitivity analysis on MNIST. Increasing the number of ghost outputs
generally improves the attainable accuracy, but also increases the amount of
synthetic information communicated. Since this mechanism is orthogonal to
the carrier-distribution and carrier-interface questions studied in
Sec.~\ref{sec:exp_fsl}, we fix it to three for the corresponding input-space
experiments. The Tiny ImageNet representation-space experiments use no ghost
outputs and are reported separately in Sec.~\ref{app:tinyimg_carriers}.

\begin{figure}[H]
    \centering
    \includegraphics[width=0.55\linewidth]{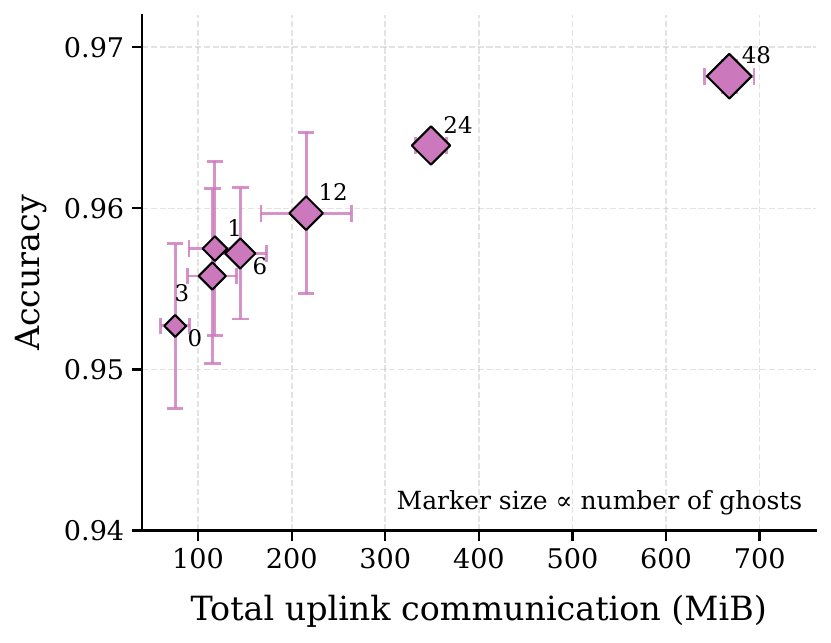}
    \caption{
    Accuracy--communication trade-off as the number of ghost outputs is
    varied on MNIST.
    Marker size is proportional to the number of ghosts.
    Error bars report the standard error.
    }
    \label{fig:ghost_ablation}
\end{figure}

% The NeurIPS checklist is not requested on the workshop CFP page. If the
% organizers later require it, uncomment the following lines.
% \clearpage
% \input{checklist.tex}

\end{document}